\documentclass[twoside]{article}

\usepackage[accepted]{aistats2026}

\usepackage[round]{natbib}

\usepackage[hidelinks]{hyperref}
\usepackage{amssymb}
\usepackage{amsmath}
\usepackage{amsthm}
\usepackage[utf8]{inputenc}
\usepackage{newunicodechar}
\newunicodechar{⁻}{\textsuperscript{-}}
\newtheorem{theorem}{Theorem}

\usepackage{algorithm}
\usepackage{algpseudocode}

\usepackage{graphicx}

\usepackage{booktabs}
\usepackage{multirow}
\usepackage{makecell}
\usepackage{xcolor}
\usepackage{cleveref}

\begin{document}
\runningtitle{SQuaT: Student-Aware Quantized Teacher Features}
\runningauthor{HyeonJun Lee, Hyeonsik Jo, Jinwoo Chung, Jangho Kim}
\twocolumn[
\aistatstitle{SQuaT: Self-Supervised Knowledge Distillation via \\ Student-Aware Quantized Teacher Features}
\aistatsauthor{HyeonJun Lee$^*$ \And Hyeonsik Jo$^*$ \And  Jinwoo Chung \And Jangho Kim$^{\dagger}$}
\aistatsaddress{Kookmin University \\ \{lbghj522, hsjo, imaboybut,  jangho.kim\}@kookmin.ac.kr} ]

\begin{abstract}
    Quantization-Aware Training (QAT) enables the deployment of quantized models with minimal accuracy degradation. However, in practical scenarios, training labels are often unavailable due to privacy, copyright, or cost constraints. Knowledge Distillation (KD) is a common approach to address this challenge, but we observe that prior work combining QAT with KD suffers from a fundamental limitation: during distillation, the range mismatch between the teacher and the quantized student model induces an unattainable residual, resulting in an irreducible lower bound on the distillation loss. Motivated by this observation, we propose \textbf{SQuaT} (\textbf{S}tudent-Aware \textbf{Qua}ntized \textbf{T}eacher Features), a label-free QAT framework with KD that theoretically eliminates this lower bound by applying the student’s quantization parameters to quantize the teacher’s features during distillation. Through comprehensive experiments across diverse settings, we demonstrate that SQuaT consistently outperforms strong baselines, with particularly pronounced gains in extreme low-bit 
    (e.g., 1- and 2-bit) settings. Furthermore, extensive evaluations across various model design choices show that our approach does not rely on specific architectural assumptions, making it broadly applicable across diverse architectures and quantization settings. The source code is available at \url{https://github.com/lcdbsa522/SQuaT}.
\end{abstract}

\begin{figure}[ht]
     \centering
     \includegraphics[width=0.45\textwidth]{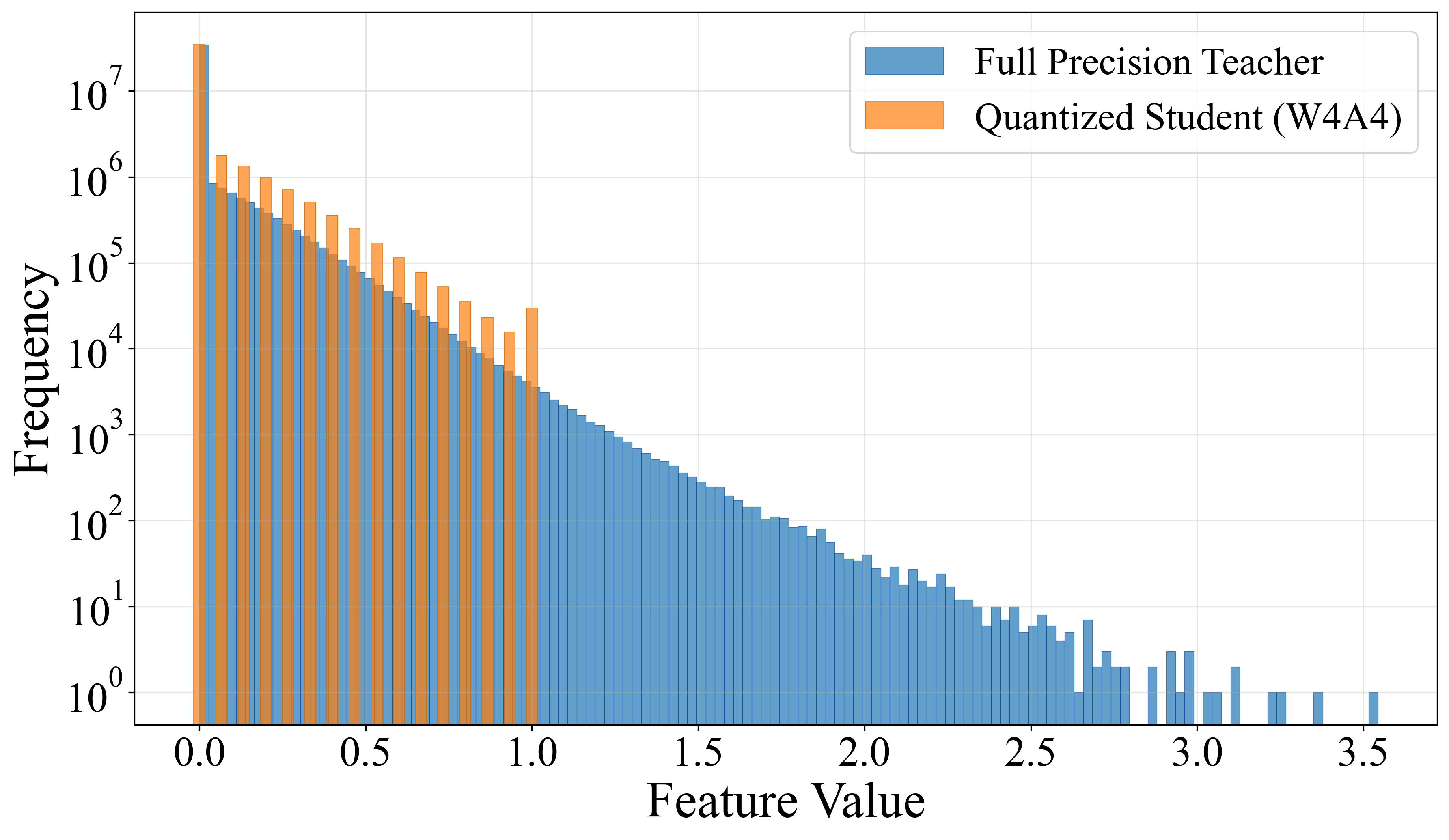}
     \caption{Distribution of feature maps for the FP teacher and 4-bit student in ResNet-20 on CIFAR-10.}
     \label{fig:Figure_1}
\end{figure}

\begin{figure*}[ht]
    \centering
    \includegraphics[width=0.8\textwidth]{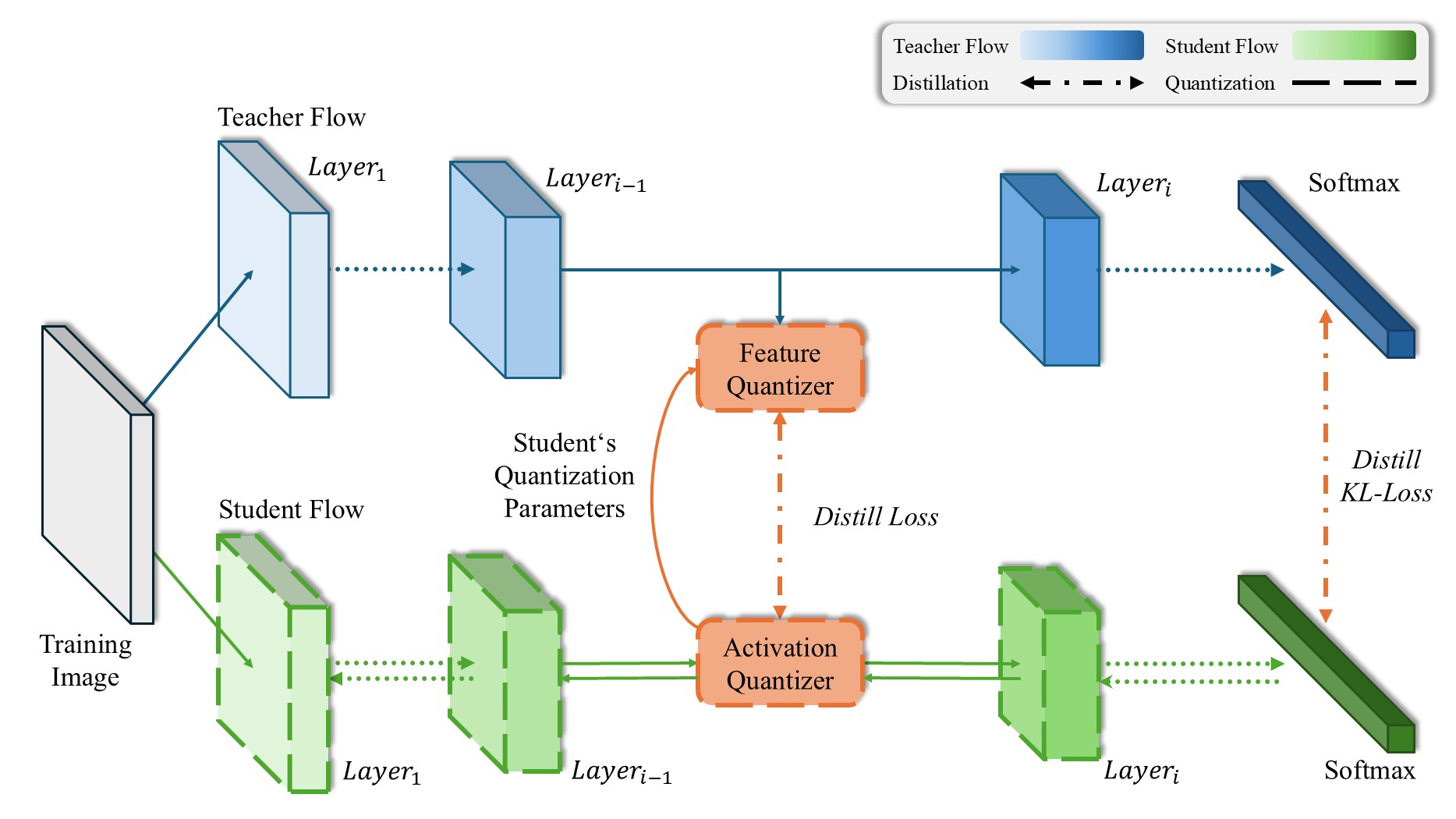}
    \caption{Overall framework of SQuaT. By projecting teacher features onto the student’s quantization lattice via a student-aware projection using the student’s quantization parameters, SQuaT reduces the distributional discrepancy between the teacher’s full-precision features and the student’s low-precision features, thereby enabling effective quantization-aware training without labels.}
    \label{fig:Figure_2}
\end{figure*}

\section{INTRODUCTION}
    Over the past decade, Deep Neural Networks (DNNs) have emerged as a dominant modeling paradigm in artificial intelligence, achieving remarkable performance across a wide range of domains, including computer vision and natural language processing. However, these advances have increasingly relied on larger model sizes, which incur substantial memory and computational costs at inference time, thereby posing practical challenges for deploying models in resource-constrained environments such as mobile and edge devices~\citep{han2015deep,kim2023finding,back2025magnitude}. Quantization is a widely adopted model compression technique that significantly reduces inference-time computation and memory usage by mapping high-precision floating-point weights and activations to low-precision integer values. In particular, Quantization-Aware Training (QAT) has been widely adopted as an effective strategy to mitigate the accuracy degradation of full-precision (FP) models by simulating quantization effects during training~\citep{jacob2018quantization, nagel2021white}. Nevertheless, existing QAT methods typically assume access to both the weights of an FP model and a sufficiently large amount of labeled training data. In practice, while the weights of pretrained FP models are often readily available, training labels are frequently inaccessible due to privacy, copyright, and cost constraints. These practical limitations naturally motivate Knowledge Distillation (KD)-based approaches~\citep{hinton2015distilling}, in which a pretrained FP model serves as a teacher model to guide a quantized student model. Accordingly, the central question addressed in this work is as follows: \textit{Can we perform effective and stable QAT relying solely on the teacher’s knowledge, without access to labeled data?}
    
    Self-Supervised Quantization-Aware Knowledge Distillation (SQAKD)~\citep{zhao2024self} demonstrated the feasibility of label-free QAT by relying exclusively on logit-level distillation during QAT training. However, this approach does not fully leverage the teacher’s intermediate representations, which contain substantially richer information than logits alone. Indeed, Self-Supervised Learning methods such as SimCLR~\citep{chen2020simple} and BYOL~\citep{grill2020bootstrap} show that feature-level learning enables powerful representation learning.
    
    However, directly applying feature-level knowledge distillation to a quantized student model in this manner suffers from a fundamental limitation. As illustrated in Figure~\ref{fig:Figure_1}, the intermediate feature maps of a FP teacher model and a low-bit quantized student model have markedly different value ranges. In other words, the representation manifolds of the FP teacher and the low-bit quantized student are inherently mismatched. While the student’s outputs are constrained to lie on a quantization lattice defined by a clipping range and step size determined by the target bit-level, the teacher produces unconstrained full-precision features. Consequently, using FP features as distillation targets for the student induces an unattainable residual between the teacher and the student, resulting in a theoretically irreducible lower bound on the distillation loss. This lower bound forces the student to continually pursue targets that it cannot attain, leading to distorted gradient signals and ultimately unstable optimization.
    
    In a related line of work, Quantized Feature Distillation (QFD)~\citep{zhu2023quantized}, proposed in a supervised learning setting, attempts to perform feature-level knowledge distillation tailored to quantized models. QFD introduces a feature quantizer into the full-precision teacher model prior to QAT and calibrates it through a few epochs of pretraining—typically corresponding to a small fraction of the total QAT training—in order to preserve the teacher’s performance. During subsequent QAT, the calibrated quantizer is used to quantize the teacher’s features to the same target bit-level as the student, enabling feature distillation. However, this approach introduces additional overhead, as it requires additional pretraining of a teacher-side quantizer. More importantly, because the teacher is calibrated independently of the student’s quantization characteristics, the quantization lattice of the teacher does not align with that of the student, even at the same bit-level. As a result, the semantic meaning of quantized values may differ between the teacher and the student.
    
    To address this fundamental limitation, we propose SQuaT (Student-Aware Quantized Teacher Features). The core idea of SQuaT is to quantize the intermediate features of the teacher model using the quantization parameters of the student model during distillation, thereby projecting the teacher’s features directly into the student’s quantized representation space. This student-aware projection aligns the teacher and student features on the same quantization lattice and enables feature-level knowledge distillation using only targets that are attainable by the student. As a result, the unattainable residual induced by FP targets that the student cannot represent is eliminated, thereby removing the theoretical lower bound on the distillation loss. We theoretically analyze and formally prove the removal of this lower bound in Section~\ref{section:3}.
    
    Furthermore, by directly reusing the student model’s quantization parameters, SQuaT ensures that the teacher and student features are interpreted consistently within the student’s quantized representation space. In other words, the semantic meaning of values represented on the quantization lattice is preserved across the two models from the student’s perspective. Figure~\ref{fig:Figure_2} provides an overview of the complete SQuaT distillation framework.
    
    While prior feature-level knowledge distillation approaches have relied on teacher-aware projection, which maps the student toward the teacher’s representation space, we instead introduce a paradigm shift to student-aware projection, in which the teacher is projected into the student’s representation space. Through this shift, SQuaT enables stable and effective feature-level knowledge distillation in label-free QAT settings.
    
    Our main contributions are summarized as follows:
    \begin{itemize}
        \item We propose SQuaT, which introduces a paradigm shift from teacher-aware to student-aware projection by leveraging the student model’s quantization parameters.
        \item Through theoretical and empirical analyses, we show that SQuaT effectively removes the lower bound caused by unattainable residuals in prior teacher-aware projection-based methods.
        \item We conduct extensive experiments to validate the proposed approach, demonstrating that SQuaT is agnostic to model design choices and broadly applicable, with particularly pronounced gains in extreme low-bit settings.
    \end{itemize}

\section{RELATED WORK}
    \subsection{Quantization-Aware Training}
        Quantization-Aware Training (QAT) simulates quantization-induced errors during training, enabling models to adapt to low-precision computation environments~\citep{jacob2018quantization, nagel2021white}. In the forward pass, weights and activations are quantized for computation, while gradients are computed with respect to the underlying full-precision (FP) weights in the backward pass. To enable gradient propagation, approximation techniques such as the Straight-Through Estimator (STE)~\citep{bengio2013estimating} have been widely adopted; however, their limitations have also been extensively discussed in prior work~\citep{bai2018proxquant,chen2019metaquant,gong2019differentiable}. To improve the stability and accuracy of QAT, various methods have been proposed~\citep{choi2018pact, li2019additive, uhlich2019differentiable}. Among them, Learned Step Size Quantization (LSQ) introduces trainable step sizes~\citep{esser2019lsq}, while Element-Wise Gradient Scaling (EWGS) leverages gradient scaling to alleviate STE-induced instability~\citep{lee2021network}. These advances have enabled QAT to be extended to extreme low-bit settings with improved stability and accuracy.

    \subsection{Knowledge Distillation for QAT}
        Knowledge distillation (KD) is a widely adopted approach for mitigating performance degradation in low-bit QAT settings. Quantized Feature Distillation (QFD)~\citep{zhu2023quantized} performs feature-level KD in supervised QAT environments. Specifically, prior to QAT, feature quantizers are inserted into intermediate layers of the teacher network and trained to compensate for quantization loss. During the subsequent QAT stage, these calibrated quantizers convert teacher features to the target bit-width, and distillation is performed on the quantized representations. Although this design attempts to align the distillation signal with the low-bit representation space, it introduces additional training overhead due to the calibration of teacher-side quantizers. Moreover, since the teacher’s quantization parameters are optimized independently of the student’s, their quantization lattices may not coincide even at the same bit-width, potentially resulting in misalignment between teacher and student representations.
    
    \subsection{Self-Supervised Learning for QAT}
        As labeled data become increasingly difficult to obtain due to privacy and cost constraints, self-supervised learning has attracted increasing attention. Self-Supervised Quantization-Aware Knowledge Distillation (SQAKD)~\citep{zhao2024self} demonstrated the feasibility of label-free QAT through logit-level distillation. However, it relies solely on output probability distributions and does not fully exploit the rich intermediate representations of the teacher model. Self-supervised learning methods such as SimCLR~\citep{chen2020simple} and BYOL~\citep{grill2020bootstrap} have shown that feature-level learning plays a crucial role in learning discriminative representations. This suggests that incorporating feature-level guidance, in addition to logit-level supervision, may help mitigate representation distortion caused by quantization in QAT settings.

\section{PROPOSED FRAMEWORK}
\label{section:3}
    In this section, we first review the formulation of QAT, and then introduce SQuaT, a self-supervised feature distillation framework. The central idea of SQuaT is to project teacher features onto the student’s quantization lattice, thereby enabling joint alignment of feature- and logit-level signals within a unified representation space.

    \subsection{Quantization-Aware Training}
        We adopt the EWGS method~\citep{lee2021network}, which directly learns quantization parameters rather than relying on the simple STE-based QAT formulation~\citep{bengio2013estimating}. We now describe the forward and backward quantization procedures used in SQuaT.
        
        \subsubsection{Forward pass}
        \label{section:3.1.1}
            Given an input tensor $x$ (representing either weights or activations), a fixed global bit-width $b$, and learnable normalization range parameters $(a^-,a^+)$, the forward quantization proceeds as follows. First, normalize to $[0,1]$:
            \begin{equation}
            x_{\mathrm{norm}}=\mathrm{clip}\!\left(\frac{x-a^-}{\,a^+-a^-\,},\,0,\,1\right).
            \end{equation}
            Project onto the $b$-bit uniform grid:
            \begin{equation}
            x_{\mathrm{quant}}=\frac{\mathrm{round}\!\left(x_{\mathrm{norm}}\cdot(2^b-1)\right)}{2^b-1}\ \in\ \Big\{0,\frac{1}{2^b-1},\dots,1\Big\}.
            \end{equation}
            Apply a type-specific linear mapping (as in EWGS) to obtain the quantized output:
            \begin{equation}
                \widehat x_{\phi_\ell}=
                \begin{cases}
                    x_{\mathrm{quant}}, & \text{activations},\\[2pt]
                    2\!\left(x_{\mathrm{quant}}-0.5\right), & \text{weights}.
                \end{cases}
            \end{equation}
            Here, $\phi_\ell$ denotes the quantization parameters for layer $\ell$, including the range $(a^-_\ell,a^+_\ell)$ and the bit-width $b_\ell$, i.e., $\phi_\ell=\{a^-_\ell,a^+_\ell,b_\ell\}$. These parameters can be defined either per layer or per channel.
        
        \subsubsection{Backward pass}
            Quantization involves non-differentiable operations (e.g., \texttt{round}, \texttt{clip}). Following EWGS, gradients are rescaled based on the quantization error and upstream signals. Let $x_{\mathrm{norm},\ell}$ be the normalized value, $x_{\mathrm{quant},\ell}$ the discretized value at layer $\ell$, $e_\ell=x_{\mathrm{norm},\ell}-x_{\mathrm{quant},\ell}$ the quantization error, and $g_\ell=\partial \mathcal L/\partial \widehat x_{\phi_\ell}$ the upstream gradient. Then
            \begin{align}
            \begin{split}
                g_{q,\ell}=\frac{\partial \mathcal L}{\partial x_{\mathrm{quant},\ell}}
                &=\frac{\partial \mathcal L}{\partial \widehat x_{\phi_\ell}}\cdot
                 \frac{\partial \widehat x_{\phi_\ell}}{\partial x_{\mathrm{quant},\ell}}\\
                &=
                \begin{cases}
                    g_\ell, & \text{activations},\\[2pt]
                    2g_\ell, & \text{weights},
                \end{cases}
            \end{split}
            \end{align}
            \begin{equation}
                \frac{\partial \mathcal L}{\partial x_{\mathrm{norm},\ell}}
                \ \approx\ 
                g_{q,\ell}\!\left(1+\eta\cdot\mathrm{sign}(g_{q,\ell})\cdot e_\ell\right),
            \end{equation}
            where $\eta$ denotes a small scaling hyperparameter.
            
    \subsection{SQuaT}
        For each layer $\ell$, let $f_{S,\ell}(\cdot;\theta_S,\phi_{S,\ell})$ and $f_{T,\ell}(\cdot)$ denote the student and teacher feature extractors at layer $\ell$, respectively, where $\phi_{S,\ell}=\{a^-_\ell,a^+_\ell,b_\ell\}$ are the student’s quantization parameters for that layer.
        
        \subsubsection{Student-Aware Projection}
            We define the student-aware projection at layer $\ell$ by
            \begin{equation}
                \Pi_{\phi_{S,\ell}}(z) \;\triangleq\; \widehat z_{\phi_{S,\ell}},
            \end{equation}
            where $\widehat z_{\phi_{S,\ell}}$ is obtained from the student's forward quantization path at layer $\ell$ (normalize $\rightarrow$ grid projection $\rightarrow$ type-specific mapping), as defined in Section~\ref{section:3.1.1}.
            
            We project teacher features onto the student’s quantization lattice via a student-aware projection, yielding student-aware quantized teacher features. The student is then trained to match these projected targets. Let $\mathcal K$ be the set of distillation layers. Define $\tilde f_{T,\ell}(x)\triangleq \Pi_{\phi_{S,\ell}}\!\big(f_{T,\ell}(x)\big)$ and optimize the following objective:
            \begin{equation}
                \min_{\theta_S,\{\phi_{S,\ell}\}}\ 
                \mathbb{E}_{x\sim\mathcal D}\,
                \frac{1}{|\mathcal K|}\sum_{\ell\in \mathcal K}
                \big\|\,\widehat f_{S,\ell}(x;\theta_S,\phi_{S,\ell})-\tilde f_{T,\ell}(x)\,\big\|_2^2,
            \end{equation}
            where the student-side feature corresponds to the actual (already quantized) student output:
            \begin{equation}
                \widehat f_{S,\ell}(x;\theta_S,\phi_{S,\ell}) \;\triangleq\; f_{S,\ell}(x;\theta_S,\phi_{S,\ell}).
            \end{equation}

        \subsubsection{Self-Supervised Knowledge Distillation}
            SQuaT performs label-free joint alignment at both the feature and logit levels. These objectives are complementary: the former preserves spatial/channel patterns, while the latter maintains class-level semantics.
            \paragraph{Feature-level alignment.}
                For a set of distillation layers $\mathcal K$, we align quantized student  features $\widehat f_{S,\ell}$ with SQuaT targets $\tilde f_{T,\ell}$:
                \begin{equation}
                    \mathcal L_{\mathrm{feat}}
                    \;=\;
                    \frac{1}{|\mathcal K|}
                    \sum_{\ell\in\mathcal K}
                    \left\Vert
                        \widehat f_{S,\ell}(x;\theta_S,\phi_{S,\ell}) - \tilde f_{T,\ell}(x)
                    \right\Vert^2_2,
                \end{equation}
            \paragraph{Logit-level alignment.}
                We smooth predictions with temperature $T_{kd}\!>\!0$ and apply KL divergence:
                \begin{align}
                \begin{split}
                    p_T=\mathrm{softmax}\!\left(\frac{z_T}{T_{kd}}\right),\quad
                    p_S=\mathrm{softmax}\!\left(\frac{z_S}{T_{kd}}\right),
                \end{split}
                \end{align}
                \begin{equation}
                    \mathcal L_{\mathrm{logit}}
                    \;=\;
                    T_{kd}^{2}\cdot \mathrm{KL}\!\left(p_T \,\Vert\, p_S\right),
                \end{equation}
                where $p_T$ serves as a pseudo-label distribution that preserves inter-class relationships and mitigates semantic drift that may arise from feature-only alignment.
            \paragraph{Joint objective.}
                By replacing the unattainable target $f_{T,\ell}$ with the attainable $\tilde f_{T,\ell}$, SQuaT minimizes
                \begin{equation}
                    \mathcal L_{\mathrm{SQuaT}}
                    \;=\;
                    \lambda_{\mathrm{feat}}\,\mathcal L_{\mathrm{feat}}
                    \;+\;
                    \lambda_{\mathrm{logit}}\,\mathcal L_{\mathrm{logit}},
                \end{equation}
                where $\lambda_{\mathrm{feat}},\lambda_{\mathrm{logit}}\!\ge\!0$ balance feature- and logit-level objectives, and the feature targets are constructed via the student-aware projection.

                The overall process of SQuaT is depicted in Algorithm~\ref{alg:algorithm_1}. During training, the teacher network $T$ remains fixed, and only the student parameters $\theta_S$ and quantization parameters $\phi_S$ are updated.

    \begin{algorithm}[t]
        \caption{\textbf{SQuaT: Self-Supervised Knowledge Distillation via Student-Aware Quantized Teacher Features}}
        \label{alg:algorithm_1}
        \begin{algorithmic}[1]
            \Require Frozen teacher $f_T(\cdot;\theta_T)$, quantized student $f_S(\cdot;\theta_S,\{\phi_{S,\ell}\}_{\ell\in L})$, unlabeled data $D$, distillation layers $\mathcal K$, temperature $T_{\mathrm{kd}}$, loss weights $(\lambda_{\mathrm{feat}},\lambda_{\mathrm{logit}})$, step size $\eta$
            \Ensure Updated student parameters $(\theta_S,\{\phi_{S,\ell}\}_{\ell\in L})$
            \Statex \textit{Notation:} $\tilde f_{T,\ell}(x) \triangleq \Pi_{\phi_{S,\ell}}(f_{T,\ell}(x))$, where $\Pi_{\phi_{S,\ell}}$ projects the teacher features onto the student’s quantization lattice via a student-aware projection.
        
            \For{each training step $t=1,2,\dots$}
                \State Sample a mini-batch $x \sim D$
                \For{each $\ell \in \mathcal K$}
                    \State $\tilde f_{T,\ell}(x) \leftarrow \Pi_{\phi_{S,\ell}}\!\big(f_{T,\ell}(x)\big)$
                \EndFor
        
                \State $\displaystyle \mathcal{L}_{\mathrm{feat}} \leftarrow \frac{1}{|\mathcal K|}
                    \sum_{\ell\in\mathcal K}
                    \left\Vert
                        \widehat f_{S,\ell}(x;\theta_S,\phi_{S,\ell}) - \tilde f_{T,\ell}(x)
                    \right\Vert^2_2$
        
                \State $\displaystyle \mathcal{L}_{\mathrm{logit}} \leftarrow T_{\mathrm{kd}}^2 \cdot
                \mathrm{KL}\!\big(p_T \,\|\, p_S\big)$
        
                \State $\displaystyle \mathcal{L} \leftarrow \lambda_{\mathrm{feat}}\,\mathcal{L}_{\mathrm{feat}}
                + \lambda_{\mathrm{logit}}\,\mathcal{L}_{\mathrm{logit}}$
        
                \State $\theta_S \leftarrow \theta_S - \eta \,\nabla_{\theta_S}\mathcal{L}$
                \State $\phi_{S,\ell} \leftarrow \phi_{S,\ell} - \eta \,\nabla_{\phi_{S,\ell}}\mathcal{L}\quad\forall \ell\in L$
            \EndFor
        \end{algorithmic}
    \end{algorithm}

    \subsection{Analysis of Student-Aware Projection}
        \label{section:3.3}
        \begin{theorem}
        \label{theorem:1}
            Fix a layer $\ell$. Let $\Pi_{\phi_{S,\ell}}:\mathbb{R}^{d_\ell}\to\mathbb{R}^{d_\ell}$ denote the student-aware projection (the student's forward quantization path at layer $\ell$), and let $\mathcal G_{\phi_{S,\ell}} \triangleq \mathrm{Im}(\Pi_{\phi_{S,\ell}})$ be the set of outputs attainable by the student at this layer. For the teacher feature $f_{T,\ell}$ (random vector induced by $x\!\sim\!\mathcal D$), the following hold:
            \begin{align}
                \text{\normalfont(a)}\quad
                &\inf_{h \in \mathcal G_{\phi_{S,\ell}}}\;
                \mathbb{E}\big\|h - f_{T,\ell}\big\|_2^2
                \;=\;
                \mathbb{E}\big\|\Pi_{\phi_{S,\ell}}(f_{T,\ell}) - f_{T,\ell}\big\|_2^2,
                \label{eq:equation_13} \\[4pt]
                \text{\normalfont(b)}\quad
                &\inf_{h \in \mathcal G_{\phi_{S,\ell}}}\;
                \mathbb{E}\big\|h - \Pi_{\phi_{S,\ell}}(f_{T,\ell})\big\|_2^2
                \;=\; 0.
                \label{eq:equation_14}
            \end{align}
        \end{theorem}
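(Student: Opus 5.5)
The plan is to read the infimum over $h\in\mathcal G_{\phi_{S,\ell}}$ as an infimum over student-attainable targets, i.e. over measurable maps $x\mapsto h(x)$ taking values in $\mathcal G_{\phi_{S,\ell}}$. A constant $h$ is a special case, and in both parts the optimum will turn out to be a function of $f_{T,\ell}$. The argument then reduces to a pointwise statement about the grid, followed by integrating over $x\sim\mathcal D$.

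\textbf{Step 1 (structure of $\mathcal G_{\phi_{S,\ell}}$ and nearest-point property).} I will first make explicit the coordinate convention in which $\Pi_{\phi_{S,\ell}}(z)$ and $z$ are compared. The forward path of Section~\ref{section:3.1.1} is a composition of three maps: a per-coordinate monotone affine normalization, then $\mathrm{clip}$ to $[0,1]$, then rounding to the uniform grid $\{0,\tfrac1{2^b-1},\dots,1\}$. For (a) to be meaningful, the lattice must be expressed in the scale of the feature. I will therefore identify $\mathcal G_{\phi_{S,\ell}}$ with the product grid
\[
\prod_{i=1}^{d_\ell}\Big\{a^-_i+(a^+_i-a^-_i)\tfrac{k}{2^{b}-1}:\ k=0,\dots,2^b-1\Big\},
\]
and regard $\Pi_{\phi_{S,\ell}}$ as taking values in it (per-layer parameters are the special case of equal $a^\pm_i$). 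The type-specific affine map of the EWGS forward path is applied identically to $\Pi_{\phi_{S,\ell}}(z)$ and to every element of $\mathcal G_{\phi_{S,\ell}}$, so it only rescales the common reference frame.

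Under this identification, the squared distance splits into a sum of one-dimensional terms and the grid is a product set. Minimizing $\|g-z\|_2^2$ over $g\in\mathcal G_{\phi_{S,\ell}}$ therefore decouples coordinatewise. In one dimension, clipping to $[a^-,a^+]$ and then rounding to the nearest grid point yields a nearest point of the finite grid to $z_i$: inside the interval this is the definition of rounding, and outside it the nearest point is the endpoint. Hence
\[
\|\Pi_{\phi_{S,\ell}}(z)-z\|_2^2=\min_{g\in\mathcal G_{\phi_{S,\ell}}}\|g-z\|_2^2
\]
for every $z$, and ties under \texttt{round} do not affect the minimal value.

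\textbf{Step 2 (part (a)).} For any admissible $h$, Step 1 gives pointwise $\|h(x)-f_{T,\ell}(x)\|^2\ge\|\Pi_{\phi_{S,\ell}}(f_{T,\ell}(x))-f_{T,\ell}(x)\|^2$. Taking expectations shows that the infimum is at least the right-hand side of \eqref{eq:equation_13}. The choice $h=\Pi_{\phi_{S,\ell}}\circ f_{T,\ell}$ is admissible: it is measurable as a composition of a Borel map (clip and round are piecewise constant/affine) with a random vector, and it takes values in $\mathcal G_{\phi_{S,\ell}}$. This choice attains the bound, so equality holds. For integrability I will assume $\mathbb E\|f_{T,\ell}\|^2<\infty$; otherwise both sides are $+\infty$.

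\textbf{Step 3 (part (b)).} The objective is nonnegative. The same choice $h=\Pi_{\phi_{S,\ell}}(f_{T,\ell})\in\mathcal G_{\phi_{S,\ell}}$ makes the integrand identically zero, so the infimum equals $0$ and is attained. Comparing (a) and (b) then shows that the residual in (a) is the irreducible lower bound that SQuaT removes.

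The main obstacle is Step 1, specifically pinning down the coordinate frame. As literally written, $\Pi_{\phi_{S,\ell}}$ outputs values in normalized units (e.g.\ in $[0,1]$), whereas $f_{T,\ell}$ is raw. I will handle this by the dequantized identification above and by remarking that the result holds in that frame. I also need to check that the affine type-specific map preserves the nearest-point property, which it does because it is a uniform positive rescaling plus a shift per coordinate.
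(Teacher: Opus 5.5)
Your proposal is correct and follows the same route as the paper: establish the nearest-point identity pointwise for each realization of $f_{T,\ell}$, take expectations, and use $h=\Pi_{\phi_{S,\ell}}(f_{T,\ell})$ both as the minimizer in (a) and as the zero-error choice in (b). The differences are matters of rigor rather than method: the paper's pointwise argument tacitly lets $h$ depend on $x$ (for a fixed vector $h$, part (b) would fail unless $\Pi_{\phi_{S,\ell}}(f_{T,\ell})$ is a.s.\ constant) and tacitly compares $h$ and $f_{T,\ell}$ in a common frame, and it justifies the nearest-point claim only by noting that $\mathcal G_{\phi_{S,\ell}}$ is the range of $\Pi_{\phi_{S,\ell}}$, which does not by itself imply it, whereas your coordinatewise clip-then-round argument on the product grid actually proves it.
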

        
        \begin{proof}
            We first establish the result pointwise and subsequently take expectations.
            
            (a) For any realization of $f_{T,\ell}$, consider the least-squares problem $\min_{h\in\mathcal G_{\phi_{S,\ell}}}\|h - f_{T,\ell}\|_2^2$. By definition, $\mathcal G_{\phi_{S,\ell}}$ is precisely the range of the student’s forward quantization map $\Pi_{\phi_{S,\ell}}$; therefore, the nearest attainable point to $f_{T,\ell}$ is its image under this mapping, $\Pi_{\phi_{S,\ell}}(f_{T,\ell})$. Hence
            \[
            \inf_{h\in\mathcal G_{\phi_{S,\ell}}}\|h - f_{T,\ell}\|_2^2
            = \big\|\Pi_{\phi_{S,\ell}}(f_{T,\ell}) - f_{T,\ell}\big\|_2^2.
            \]
            Taking expectations over $x\!\sim\!\mathcal D$ yields \eqref{eq:equation_13}.
            
            (b) Let the target be $\Pi_{\phi_{S,\ell}}(f_{T,\ell})$. Choosing $h=\Pi_{\phi_{S,\ell}}(f_{T,\ell})\in\mathcal G_{\phi_{S,\ell}}$ achieves zero error pointwise, which implies
            \(
            \inf_{h\in\mathcal G_{\phi_{S,\ell}}}\|h - \Pi_{\phi_{S,\ell}}(f_{T,\ell})\|_2^2 = 0.
            \)
            Taking expectations gives \eqref{eq:equation_14}.
        \end{proof}
        
        Consequently, supervising a quantized student with the FP target $f_{T,\ell}$ cannot drive the expected squared error below the quantization mismatch $\mathbb{E}\big\|\Pi_{\phi_{S,\ell}}(f_{T,\ell})-f_{T,\ell}\big\|_2^2$, whereas using the student-quantized target $\tilde f_{T,\ell}=\Pi_{\phi_{S,\ell}}(f_{T,\ell})$ removes this lower bound. Moreover, letting $\mathrm{dist}(u,S)$ denote the Euclidean distance, we have
        \[
        \mathbb{E}\Big[\big\|\Pi_{\phi_{S,\ell}}(f_{T,\ell})-f_{T,\ell}\big\|_2^2\Big]
        \;=\;
        \mathbb{E}\Big[\mathrm{dist}\!\big(f_{T,\ell},\mathcal G_{\phi_{S,\ell}}\big)^2\Big],
        \]
        so whenever any component of $f_{T,\ell}$ lies outside the learned activation range $[a^-_\ell,a^+_\ell]$ (or outside the weight mapping range), this lower bound \emph{increases monotonically} with the distance beyond the range. Projecting the teacher feature via $\Pi_{\phi_{S,\ell}}$ maps such components back onto the student’s attainable set, thereby removing the unattainable residual and stabilizing optimization.

\section{EXPERIMENTS}
    In this section, we evaluate SQuaT across diverse datasets, model architectures, and quantization settings. We first compare SQuaT with the supervised QAT method EWGS~\citep{lee2021network} and the label-free QAT with KD baseline SQAKD~\citep{zhao2024self} on CIFAR-10 and CIFAR-100~\citep{krizhevsky2009learning}. For the large-scale ImageNet-1K benchmark~\citep{deng2009imagenet}, we compare SQuaT with SQAKD. To evaluate architectural generalization in vision models, we additionally conduct experiments on the Vision Transformer--based DeiT-Tiny~\citep{dosovitskiy2020image, touvron2021training} for image classification on CIFAR datasets. For NLP tasks, we evaluate BERT~\citep{devlin2019bert} on the GLUE benchmark~\citep{wang2018glue}. Furthermore, we compare projection strategies for feature distillation to empirically validate the student-aware projection principle described in Section~\ref{section:3.3}. Finally, we measure on-device inference performance on edge hardware to assess practical deployability.
    
    \subsection{Experimental Setup}
        We describe the datasets, model architectures, and quantization configurations used in our experiments. For vision tasks, we conduct image classification on CIFAR-10, CIFAR-100, and ImageNet-1K. For NLP tasks, we use the GLUE benchmark, including RTE, SST-2, QNLI. All experiments are performed in a strictly label-free setting without access to ground-truth labels, and the pretrained full-precision teacher remains frozen throughout training.

        For vision models, we use ResNet-20 and ResNet-32 for CIFAR and ResNet-18~\citep{he2016deep} for ImageNet-1K. To assess architectural generalization, we additionally include the Vision Transformer--based DeiT-Tiny~\citep{touvron2021training}. For NLP experiments, we use BERT$_\text{BASE}$~\citep{devlin2019bert}.

        We consider bit-width configurations ranging from 1-bit to 8-bit, and quantize both weights and activations in all experiments. Vision models are trained with an EWGS-based QAT framework~\citep{lee2021network}, while Transformer-based models adopt an STE-based QAT framework~\citep{bengio2013estimating}. Despite these differences, we apply the same student-aware projection strategy across all settings. Implementation details are provided in Appendix~\ref{appendix:implement}.
        
    \subsection{Image Classification Results}
        \Cref{table:cifar10_resnet,table:cifar100_resnet,table:imagenet_resnet18} report image classification results on CIFAR-10, CIFAR-100, and ImageNet-1K. For CIFAR, we evaluate 1-, 2-, and 4-bit configurations, while for ImageNet-1K we consider 1-, 3-, 4-, and 8-bit settings. We compare SQuaT against the supervised QAT method EWGS and the label-free QAT with KD baseline SQAKD.
        
        On CIFAR-10, SQuaT consistently achieves the best performance across all bit-widths and model variants. Under 1-bit quantization with ResNet-20, SQuaT improves over SQAKD by +0.45 pp, indicating that the proposed student-aware projection remains effective even under severe quantization constraints. The improvements are particularly evident in extremely low-bit settings such as W1A1, where representational capacity is heavily restricted and mismatches between teacher and student representations become more pronounced.
        
        A similar trend is observed on CIFAR-100: under 1-bit quantization with ResNet-32, SQuaT improves over SQAKD by +1.74 pp, with gains particularly pronounced in extreme low-bit settings. Across different model sizes and quantization configurations, SQuaT consistently achieves either the best or competitive performance, demonstrating stable improvements over the baseline.
        
        On the large-scale ImageNet-1K benchmark, SQuaT again shows positive gains across all evaluated settings. Under 1-bit quantization, it improves over SQAKD by +0.40 pp. Although modest in absolute magnitude, such gains are non-trivial given the scale of the dataset and the difficulty of extreme quantization. These results indicate that the benefit of student-aware projection persists even in large-scale training settings where optimization becomes significantly more challenging.
        
        Overall, SQuaT consistently improves performance across datasets of varying scale and complexity, as well as across different bit-widths and model sizes. The performance gap widens in extreme low-bit settings, highlighting the robustness of the proposed projection strategy under stringent quantization constraints. 
        Notably, despite operating in a strictly label-free setting, SQuaT outperforms the supervised QAT baseline EWGS in several configurations. Furthermore, under 4-bit quantization on CIFAR-10 and CIFAR-100, SQuaT even surpasses the corresponding pretrained full-precision models. These results suggest that aligning teacher features to the student’s quantization lattice yields systematic advantages over prior projection-based distillation baselines and can produce edge-friendly models with strong accuracy.

        \begin{table*}[ht]
            \caption{Top-1 Test Accuracy (\%) of ResNet-20 and ResNet-32 on CIFAR-10.}
            \label{table:cifar10_resnet}
            \centering
            \begin{tabular}{c | ccc | ccc}
                \toprule
                \multirow{2}{*}{\textbf{Method}} 
                & \multicolumn{3}{c}{\textbf{ResNet-20 \scriptsize{(FP: 92.63)}}} 
                & \multicolumn{3}{c}{\textbf{ResNet-32 \scriptsize{(FP: 93.71)}}} \\
                \cmidrule(lr){2-4} \cmidrule(lr){5-7}
                & \textbf{W1A1} & \textbf{W2A2} & \textbf{W4A4} & \textbf{W1A1} & \textbf{W2A2} & \textbf{W4A4} \\
                \midrule
                
                \footnotesize{EWGS}
                & \footnotesize{86.42}{\scriptsize\textcolor{gray}{$\pm$0.01}} 
                & \footnotesize{91.41}{\scriptsize\textcolor{gray}{$\pm$0.04}} 
                & \footnotesize{92.49}{\scriptsize\textcolor{gray}{$\pm$0.09}} 
                
                & \footnotesize{86.56}{\scriptsize\textcolor{gray}{$\pm$0.03}} 
                & \footnotesize{92.89}{\scriptsize\textcolor{gray}{$\pm$0.05}} 
                & \footnotesize{93.75}{\scriptsize\textcolor{gray}{$\pm$0.03}} \\ 
                \midrule
                
                \textbf{SQAKD}\footnotesize{(EWGS)}
                & 86.52{\scriptsize\textcolor{gray}{$\pm$0.05}} 
                & 91.65{\scriptsize\textcolor{gray}{$\pm$0.13}} 
                & 92.60{\scriptsize\textcolor{gray}{$\pm$0.01}} 

                & 88.10{\scriptsize\textcolor{gray}{$\pm$0.11}} 
                & 92.90{\scriptsize\textcolor{gray}{$\pm$0.12}} 
                & 93.59{\scriptsize\textcolor{gray}{$\pm$0.02}} \\ 

                \textbf{SQuaT}\footnotesize{(EWGS)}
                & \textbf{86.97}{\scriptsize\textcolor{gray}{$\pm$0.14}} 
                & \textbf{91.90}{\scriptsize\textcolor{gray}{$\pm$0.11}} 
                & \textbf{92.67}{\scriptsize\textcolor{gray}{$\pm$0.03}} 

                & \textbf{88.17}{\scriptsize\textcolor{gray}{$\pm$0.04}} 
                & \textbf{93.16}{\scriptsize\textcolor{gray}{$\pm$0.09}} 
                & \textbf{93.72}{\scriptsize\textcolor{gray}{$\pm$0.01}} \\ 

                \textcolor{orange}{$\Delta$} \footnotesize{(SQuaT - SQAKD)}
                & \footnotesize\textcolor{orange}{+0.45} 
                & \footnotesize\textcolor{orange}{+0.35} 
                & \footnotesize\textcolor{orange}{+0.07} 

                & \footnotesize\textcolor{orange}{+0.07} 
                & \footnotesize\textcolor{orange}{+0.26} 
                & \footnotesize\textcolor{orange}{+0.13} \\ 
            
            \bottomrule
            \end{tabular}
        \end{table*}
        
        \begin{table*}[ht]
            \caption{Top-1 Test Accuracy (\%) of ResNet-20 and ResNet-32 on CIFAR-100.}
            \label{table:cifar100_resnet}
            \centering
            \begin{tabular}{c | ccc | ccc}
                \toprule
                \multirow{2}{*}{\textbf{Method}} 
                & \multicolumn{3}{c}{\textbf{ResNet-20 \scriptsize{(FP: 68.89)}}} 
                & \multicolumn{3}{c}{\textbf{ResNet-32 \scriptsize{(FP: 71.40)}}} \\
                \cmidrule(lr){2-4} \cmidrule(lr){5-7}
                & \textbf{W1A1} & \textbf{W2A2} & \textbf{W4A4} & \textbf{W1A1} & \textbf{W2A2} & \textbf{W4A4} \\
                \midrule
                
                \footnotesize{EWGS}
                & \footnotesize{56.86}{\scriptsize\textcolor{gray}{$\pm$0.24}} 
                & \footnotesize{66.68}{\scriptsize\textcolor{gray}{$\pm$0.16}} 
                & \footnotesize{68.49}{\scriptsize\textcolor{gray}{$\pm$0.03}} 
                
                & \footnotesize{59.76}{\scriptsize\textcolor{gray}{$\pm$0.44}} 
                & \footnotesize{69.12}{\scriptsize\textcolor{gray}{$\pm$0.35}} 
                & \footnotesize{70.60}{\scriptsize\textcolor{gray}{$\pm$0.14}} \\ 
                \midrule
                
                \textbf{SQAKD}\footnotesize{(EWGS)}
                & 56.38{\scriptsize\textcolor{gray}{$\pm$0.03}} 
                & 66.76{\scriptsize\textcolor{gray}{$\pm$0.29}} 
                & 69.13{\scriptsize\textcolor{gray}{$\pm$0.01}} 

                & 59.38{\scriptsize\textcolor{gray}{$\pm$0.04}} 
                & 70.00{\scriptsize\textcolor{gray}{$\pm$0.01}} 
                & 71.65{\scriptsize\textcolor{gray}{$\pm$0.01}} \\ 
                
                \textbf{SQuaT}\footnotesize{(EWGS)}
                & \textbf{56.74}{\scriptsize\textcolor{gray}{$\pm$0.12}} 
                & \textbf{67.35}{\scriptsize\textcolor{gray}{$\pm$0.01}} 
                & \textbf{69.15}{\scriptsize\textcolor{gray}{$\pm$0.02}} 

                & \textbf{61.12}{\scriptsize\textcolor{gray}{$\pm$0.19}} 
                & \textbf{70.57}{\scriptsize\textcolor{gray}{$\pm$0.04}} 
                & \textbf{71.94}{\scriptsize\textcolor{gray}{$\pm$0.05}} \\ 

                \textcolor{orange}{$\Delta$} \footnotesize{(SQuaT - SQAKD)}
                & \footnotesize\textcolor{orange}{+0.36} 
                & \footnotesize\textcolor{orange}{+0.59} 
                & \footnotesize\textcolor{orange}{+0.02} 

                & \footnotesize\textcolor{orange}{+1.74} 
                & \footnotesize\textcolor{orange}{+0.57} 
                & \footnotesize\textcolor{orange}{+0.29} \\ 
            
            \bottomrule
            \end{tabular}
        \end{table*}
        
        \begin{table}[ht]
            \caption{Top-1 Test Accuracy (\%) of ResNet-18 on ImageNet-1K.}
            \label{table:imagenet_resnet18}
            \centering
            \begin{tabular}{c | ccc}
                \toprule
                \multirow{2}{*}{\textbf{Bit-width}} 
                & \multicolumn{3}{c}{\textbf{ResNet-18 \scriptsize{(FP: 69.76)}}} \\
                \cmidrule(lr){2-4}
                & \textbf{SQAKD} & \textbf{SQuaT} & \textcolor{orange}{$\Delta$} \\
                \midrule
                
                \textbf{W1A1}
                & 49.46{\scriptsize\textcolor{gray}{$\pm$0.01}} 
                & \textbf{49.86}{\scriptsize\textcolor{gray}{$\pm$0.25}} 
                & \footnotesize\textcolor{orange}{+0.40} \\ 
                
                \textbf{W3A3}
                & 67.62{\scriptsize\textcolor{gray}{$\pm$0.11}} 
                & \textbf{67.81}{\scriptsize\textcolor{gray}{$\pm$0.06}} 
                & \footnotesize\textcolor{orange}{+0.19} \\ 

                \textbf{W4A4}
                & 68.56{\scriptsize\textcolor{gray}{$\pm$0.13}}  
                & \textbf{68.77}{\scriptsize\textcolor{gray}{$\pm$0.07}} 
                & \footnotesize\textcolor{orange}{+0.21} \\ 
                
                \textbf{W8A8}
                & 69.25{\scriptsize\textcolor{gray}{$\pm$0.01}}  
                & \textbf{69.27}{\scriptsize\textcolor{gray}{$\pm$0.01}} 
                & \footnotesize\textcolor{orange}{+0.02} \\ 
            
            \bottomrule
            \end{tabular}
        \end{table}        
        
    \subsection{Generality Across Model Designs and Training Settings}
        We evaluate whether SQuaT consistently improves performance across different model architectures, domains, loss functions, and quantization methods. Specifically, we examine whether the gains of SQuaT persist when varying (i) model architecture (CNN, Vision Transformer, and Transformer), (ii) task domain (vision and NLP), (iii) the feature distillation loss, and (iv) the quantization method and training framework.
        
        Table~\ref{table:cifar_deit} presents results on the Vision Transformer--based DeiT-Tiny. Unlike the CNN-based ResNet family, DeiT-Tiny adopts a self-attention architecture. Nevertheless, SQuaT consistently outperforms SQAKD across all bit-width settings, indicating that the proposed student-aware projection strategy is not tied to convolutional inductive biases and generalizes to structurally distinct architectures.
        
        Table~\ref{table:glue_bert} reports results on BERT$_\text{BASE}$ evaluated on the GLUE benchmark (RTE, SST-2, QNLI). Across all tasks and bit-width configurations, SQuaT achieves higher performance than the baseline. These results demonstrate that the proposed approach extends beyond vision classification and remains effective for Transformer-based NLP models, suggesting that the benefits of student-aware projection are not limited to convolutional networks or vision tasks.
        
        In Table~\ref{table:fd_loss}, we investigate whether SQuaT depends on a specific feature distillation loss by comparing multiple loss functions, including $L_1$, $L_2$, KL divergence, and cosine similarity. Across all loss configurations, SQuaT consistently yields improvements over the baseline, indicating that the method is not specialized to a particular loss formulation.
        
        These experiments are conducted under diverse quantization frameworks and quantizer designs. The CNN-based CIFAR/ImageNet results in \Cref{table:cifar10_resnet,table:cifar100_resnet,table:imagenet_resnet18} use uniform quantization with EWGS-based QAT. In contrast, the DeiT-Tiny experiments adopt STATSQ~\citep{liu2023oscillation} and LSQ~\citep{esser2019lsq}, while the NLP experiments employ TwnQuantizer~\citep{li2016ternary} and SymQuantizer~\citep{shen2020q}. All Transformer-based models are trained using STE-based QAT.
        
        Despite these heterogeneous quantization settings, SQuaT consistently improves performance. These observations suggest that the effectiveness of SQuaT does not rely on a particular architectural assumption or quantization implementation, but rather stems from the general principle of aligning teacher features to the student’s quantization lattice.
        
        \begin{table*}[ht]
            \caption{Top-1 Test Accuracy (\%) of DeiT-Tiny on CIFAR-10 and CIFAR-100.}
            \label{table:cifar_deit}
            \centering
            \begin{tabular}{c | ccc | ccc}
                \toprule
                \multirow{2}{*}{\textbf{Method}} 
                & \multicolumn{3}{c}{\textbf{CIFAR-10 \scriptsize{(FP: 97.26)}}} 
                & \multicolumn{3}{c}{\textbf{CIFAR-100 \scriptsize{(FP: 97.33)}}} \\
                \cmidrule(lr){2-4} \cmidrule(lr){5-7}
                & \textbf{W2A2} & \textbf{W3A3} & \textbf{W4A4} & \textbf{W2A2} & \textbf{W3A3} & \textbf{W4A4} \\
                \midrule                
                
                \textbf{SQAKD}
                & 85.78{\scriptsize\textcolor{gray}{$\pm$0.11}} 
                & 85.81{\scriptsize\textcolor{gray}{$\pm$0.09}} 
                & 85.88{\scriptsize\textcolor{gray}{$\pm$0.01}} 

                & 86.39{\scriptsize\textcolor{gray}{$\pm$0.01}} 
                & 86.21{\scriptsize\textcolor{gray}{$\pm$0.16}} 
                & 86.78{\scriptsize\textcolor{gray}{$\pm$0.15}} \\ 
                
                \textbf{SQuaT}
                & \textbf{86.67}{\scriptsize\textcolor{gray}{$\pm$0.23}} 
                & \textbf{87.50}{\scriptsize\textcolor{gray}{$\pm$0.02}} 
                & \textbf{87.98}{\scriptsize\textcolor{gray}{$\pm$0.01}} 

                & \textbf{87.66}{\scriptsize\textcolor{gray}{$\pm$0.23}} 
                & \textbf{89.48}{\scriptsize\textcolor{gray}{$\pm$0.51}} 
                & \textbf{90.20}{\scriptsize\textcolor{gray}{$\pm$0.23}} \\ 

                \textcolor{orange}{$\Delta$}
                & \footnotesize\textcolor{orange}{+0.89} 
                & \footnotesize\textcolor{orange}{+1.69} 
                & \footnotesize\textcolor{orange}{+2.10} 

                & \footnotesize\textcolor{orange}{+1.27} 
                & \footnotesize\textcolor{orange}{+3.27} 
                & \footnotesize\textcolor{orange}{+3.42} \\ 
            
            \bottomrule
            \end{tabular}
        \end{table*}
        
        \begin{table*}[ht]
            \caption{Performance (\%) of BERT$_\text{BASE}$ on the GLUE Benchmark.}
            \label{table:glue_bert}
            \centering
            \begin{tabular}{c|c|ccc}
                \toprule
                \multirow{2}{*}{\textbf{Bit-width}}
                & \multirow{2}{*}{\textbf{Method}}
                & {\textbf{RTE}}
                & {\textbf{SST-2}}
                & {\textbf{QNLI}} \\
                &
                & \footnotesize 2.5k \scriptsize{(FP: 68.95)} & \footnotesize 67k \scriptsize{(FP: 93.23)} & \footnotesize 108k \scriptsize{(FP: 91.25)} \\
                \midrule

                \multirow{3}{*}{\textbf{W3A3}}
                & SQAKD
                & 54.15{\scriptsize\textcolor{gray}{$\pm$0.51}} 
                & 86.24{\scriptsize\textcolor{gray}{$\pm$0.16}} 
                & 81.88{\scriptsize\textcolor{gray}{$\pm$0.13}} \\ 
                
                & SQuaT
                & \textbf{57.22}{\scriptsize\textcolor{gray}{$\pm$0.25}} 
                & \textbf{86.93}{\scriptsize\textcolor{gray}{$\pm$0.16}} 
                & \textbf{83.14}{\scriptsize\textcolor{gray}{$\pm$0.06}} \\ 
                
                & \textcolor{orange}{$\Delta$}
                & \footnotesize\textcolor{orange}{+3.07} 
                & \footnotesize\textcolor{orange}{+0.69} 
                & \footnotesize\textcolor{orange}{+1.26} \\ 

                \midrule
                
                \multirow{3}{*}{\textbf{W4A4}}
                & SQAKD
                & 57.40{\scriptsize\textcolor{gray}{$\pm$1.02}} 
                & 92.44{\scriptsize\textcolor{gray}{$\pm$0.16}} 
                & 90.01{\scriptsize\textcolor{gray}{$\pm$0.16}} \\ 
                
                & SQuaT
                & \textbf{65.88}{\scriptsize\textcolor{gray}{$\pm$0.25}} 
                & \textbf{92.72}{\scriptsize\textcolor{gray}{$\pm$0.08}} 
                & \textbf{90.21}{\scriptsize\textcolor{gray}{$\pm$0.03}} \\ 
                
                & \textcolor{orange}{$\Delta$}
                & \footnotesize\textcolor{orange}{+8.48} 
                & \footnotesize\textcolor{orange}{+0.28} 
                & \footnotesize\textcolor{orange}{+0.20} \\ 

                \midrule
                
                \multirow{3}{*}{\textbf{W8A8}}
                & SQAKD
                & 65.70{\scriptsize\textcolor{gray}{$\pm$0.51}} 
                & 93.41{\scriptsize\textcolor{gray}{$\pm$0.08}} 
                & 91.53{\scriptsize\textcolor{gray}{$\pm$0.02}} \\ 
                
                & SQuaT
                & \textbf{69.13}{\scriptsize\textcolor{gray}{$\pm$0.25}} 
                & \textbf{93.70}{\scriptsize\textcolor{gray}{$\pm$0.16}} 
                & \textbf{91.86}{\scriptsize\textcolor{gray}{$\pm$0.01}} \\ 
                
                & \textcolor{orange}{$\Delta$}
                & \footnotesize\textcolor{orange}{+3.43} 
                & \footnotesize\textcolor{orange}{+0.29} 
                & \footnotesize\textcolor{orange}{+0.33} \\ 
            
            \bottomrule
            \end{tabular}
        \end{table*}

        \begin{table}[ht]
            \caption{Top-1 Test Accuracy (\%) of ResNet-20 on CIFAR-10 with various feature loss types.}
            \label{table:fd_loss}
            \centering
            \begin{tabular}{c | ccc}
                \toprule
                \textbf{Loss Func.}
                & \textbf{W1A1} & \textbf{W2A2} & \textbf{W4A4} \\
                \midrule
                
                \textbf{SQAKD}
                & 86.47{\scriptsize\textcolor{gray}{$\pm$0.28}} 
                & 91.62{\scriptsize\textcolor{gray}{$\pm$0.03}} 
                & 92.59{\scriptsize\textcolor{gray}{$\pm$0.02}} \\ 
                \midrule
                
                \textbf{SQuaT$_{L_1}$}
                & 86.78{\scriptsize\textcolor{gray}{$\pm$0.14}} 
                & 91.77{\scriptsize\textcolor{gray}{$\pm$0.09}} 
                & 92.64{\scriptsize\textcolor{gray}{$\pm$0.03}} \\ 
                
                \textbf{SQuaT$_{L_2}$}
                & \textcolor{orange}{\textbf{86.97}}{\scriptsize\textcolor{gray}{$\pm$0.14}} 
                & \textcolor{orange}{\textbf{91.90}}{\scriptsize\textcolor{gray}{$\pm$0.11}} 
                & 92.67{\scriptsize\textcolor{gray}{$\pm$0.03}} \\ 

                \textbf{SQuaT$_\text{KL}$}
                & 86.74{\scriptsize\textcolor{gray}{$\pm$0.15}} 
                & 91.74{\scriptsize\textcolor{gray}{$\pm$0.03}} 
                & \textcolor{orange}{\textbf{92.70}}{\scriptsize\textcolor{gray}{$\pm$0.05}} \\ 
                
                \textbf{SQuaT$_\text{Cos}$}
                & 86.74{\scriptsize\textcolor{gray}{$\pm$0.26}} 
                & 91.66{\scriptsize\textcolor{gray}{$\pm$0.05}} 
                & 92.66{\scriptsize\textcolor{gray}{$\pm$0.03}} \\ 
            
            \bottomrule
            \end{tabular}
        \end{table}

    \subsection{Empirical Analysis of Student-Aware Projection}
        In this section, we conduct both quantitative and qualitative analyses of the proposed student-aware projection strategy. Table~\ref{table:projection_strategy} compares different projection schemes used in feature distillation. Non-aware directly distills full-precision teacher features without any alignment. Teacher-aware, following QFD~\citep{zhu2023quantized}, attaches a quantizer to the teacher, calibrates it over several epochs, and then projects the teacher features using the calibrated quantizer. In contrast, Student-aware projects teacher features onto the student’s quantization lattice before performing distillation, as in SQuaT.
        
        As shown in Table~\ref{table:projection_strategy}, the Student-aware scheme achieves higher accuracy than both Non-aware and Teacher-aware approaches across all bit-widths and model configurations. Notably, Teacher-aware projection often performs worse than the Non-aware variant. This suggests that a quantizer calibrated independently on the teacher side, without accounting for the student’s representational constraints, may still leave a mismatch between teacher and student representations. In contrast, explicitly aligning teacher features to the student’s representational space proves more effective than either directly using full-precision features or relying on teacher-side calibration.

        Figure~\ref{fig:cifar_loss_acc} shows the learning curves of feature distillation loss and Top-1 test accuracy for the three projection schemes. The Student-aware approach converges to the lowest feature distillation loss throughout training, indicating stable optimization. In contrast, the Teacher-aware scheme converges to a higher loss than even the Non-aware scheme, revealing a clear limitation. The Non-aware scheme initially decreases faster but later exhibits an increase in feature distillation loss as training progresses, suggesting a mismatch between full-precision teacher features and quantized student representations. As a result, the Student-aware approach achieves the highest final accuracy, whereas the others oscillate at higher loss levels or converge more slowly, ultimately leading to inferior performance across training and evaluation.

        These empirical observations are consistent with the theoretical analysis in Section~\ref{section:3.3}. Theorem~\ref{theorem:1} shows that projecting teacher features onto the student’s quantization lattice removes the irreducible mismatch. Accordingly, the superior convergence of the Student-aware scheme indicates effective removal of this lower bound, leading to more stable optimization and higher final performance.
        
        \begin{table*}[ht]
            \caption{Analysis of Feature Distillation Method.}
            \label{table:projection_strategy}
            \centering
            \begin{tabular}{c | ccc | ccc}
                \toprule
                \multirow{2}{*}{\textbf{Method}} 
                & \multicolumn{3}{c |}{\textbf{CIFAR-10 / ResNet-20 \scriptsize{(FP: 92.63)}}} 
                & \multicolumn{3}{c}{\textbf{CIFAR-100 / ResNet-32 \scriptsize{(FP: 71.40)}}} \\
                \cmidrule(lr){2-4} \cmidrule(lr){5-7}
                & \textbf{W1A1} & \textbf{W2A2} & \textbf{W4A4} & \textbf{W1A1} & \textbf{W2A2} & \textbf{W4A4} \\
                \midrule
                
                \textbf{Non-aware}
                & \footnotesize{86.30}{\scriptsize\textcolor{gray}{$\pm$0.12}} 
                & \footnotesize{91.73}{\scriptsize\textcolor{gray}{$\pm$0.13}} 
                & \footnotesize{92.64}{\scriptsize\textcolor{gray}{$\pm$0.02}} 
                
                & 60.77{\scriptsize\textcolor{gray}{$\pm$0.16}} 
                & 70.49{\scriptsize\textcolor{gray}{$\pm$0.04}} 
                & 71.69{\scriptsize\textcolor{gray}{$\pm$0.07}} \\ 
                
                \textbf{Teacher-aware}
                & 86.44{\scriptsize\textcolor{gray}{$\pm$0.18}} 
                & 91.59{\scriptsize\textcolor{gray}{$\pm$0.09}} 
                & 92.44{\scriptsize\textcolor{gray}{$\pm$0.05}} 

                & 59.59{\scriptsize\textcolor{gray}{$\pm$0.24}} 
                & 69.50{\scriptsize\textcolor{gray}{$\pm$0.15}} 
                & 71.14{\scriptsize\textcolor{gray}{$\pm$0.13}} \\ 
                
                \textbf{Student-aware}
                & \textcolor{orange}{\textbf{86.97}}{\scriptsize\textcolor{gray}{$\pm$0.14}} 
                & \textcolor{orange}{\textbf{91.90}}{\scriptsize\textcolor{gray}{$\pm$0.11}} 
                & \textcolor{orange}{\textbf{92.67}}{\scriptsize\textcolor{gray}{$\pm$0.03}} 

                & \textcolor{orange}{\textbf{61.02}}{\scriptsize\textcolor{gray}{$\pm$0.19}} 
                & \textcolor{orange}{\textbf{70.57}}{\scriptsize\textcolor{gray}{$\pm$0.04}} 
                & \textcolor{orange}{\textbf{71.94}}{\scriptsize\textcolor{gray}{$\pm$0.05}} \\ 
            
            \bottomrule
            \end{tabular}
        \end{table*}

        \begin{figure*}[ht]
            \centering
            \includegraphics[width=\textwidth]{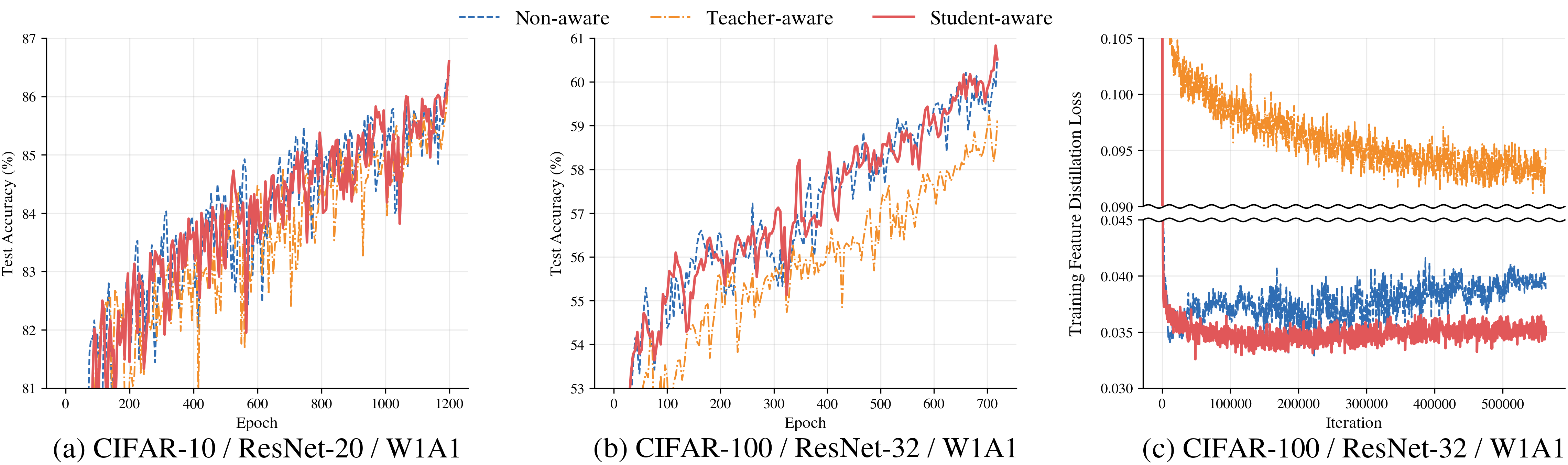}
            \caption{Training curves of feature distillation loss and Top-1 test accuracy under different bit-width settings.}
            \label{fig:cifar_loss_acc}
        \end{figure*}

    \subsection{Sensitivity to Loss Weight}
        We analyze the sensitivity of SQuaT to the feature distillation loss weight under 4-bit quantization on CIFAR-100 with ResNet-32 (see Table~\ref{table:cifar100_loss_weight}). Following the weighting scheme of SQAKD, we fix the logit loss weight to $\lambda_{\text{logit}}=100$ and vary the feature loss weight $\lambda_{\text{feat}}$. Among the evaluated configurations, $\lambda_{\text{feat}}=10$ achieves the best accuracy of 71.94\%. Accordingly, we use $\lambda_{\text{logit}}=100$ and $\lambda_{\text{feat}}=10$ for the CIFAR-100 ResNet-32 4-bit setting, while all other experiments set $\lambda_{\text{logit}}=\lambda_{\text{feat}}=1$.
            
        \begin{table}[ht]
            \centering
            \caption{Sensitivity to the feature loss weight $\lambda_{\text{feat}}$ on CIFAR-100 with ResNet-32 under the 4-bit setting.}
            \label{table:cifar100_loss_weight}
            \begin{tabular}{c|c|c|c}
                \toprule
                $\lambda_\mathrm{feat}$ & Top-1 Acc. & $\lambda_\mathrm{feat}$ & Top-1 Acc. \\ 
                \midrule
                \textbf{10}  & \textcolor{orange}{\textbf{71.94}} $\pm$ 0.05 & \textbf{60}  & 71.77 $\pm$ 0.14 \\
                \textbf{20}  & 71.86 $\pm$ 0.06 & \textbf{70}  & 71.76 $\pm$ 0.06 \\
                \textbf{30}  & 71.71 $\pm$ 0.03 & \textbf{80}  & 71.78 $\pm$ 0.06 \\
                \textbf{40}  & 71.75 $\pm$ 0.05 & \textbf{90}  & 71.79 $\pm$ 0.04 \\
                \textbf{50}  & 71.73 $\pm$ 0.03 & \textbf{100} & 71.78 $\pm$ 0.01 \\
                \bottomrule
            \end{tabular}
        \end{table}

    \subsection{On-Device Evaluation}
        To evaluate practical efficiency, we measure inference performance on an NVIDIA Jetson Nano using ResNet-18 trained on ImageNet (see Table~\ref{table:imagenet_jetson}). Compared with FP32 inference, INT8 inference increases throughput from 294.32 FPS to 2046.03 FPS and reduces latency from 3.44 ms to 0.53 ms, corresponding to a 6.49$\times$ speedup. These results demonstrate that SQuaT-trained quantized models provide substantial efficiency gains on edge devices while maintaining competitive accuracy.

        \begin{table}[ht]
            \caption{Inference Performance on Jetson Nano.} 
            \label{table:imagenet_jetson}
            \centering
            \resizebox{\linewidth}{!}{ 
            \begin{tabular}{ccccc}
                \toprule
                \multicolumn{1}{c}{\textbf{Model}}
                & {\makecell{\textbf{Bit}\\\textbf{width}}} 
                & {\makecell{\textbf{Throughput}\\\textbf{(FPS)}}}
                & {\makecell{\textbf{Latency}\\\textbf{(ms)}}}
                & \textbf{Speedup} \\
                \midrule
                \multirow{2}{*}{ResNet-18}
                    & FP32 & 294.32 & 3.442 & - \\
                    & INT8 & 2046.03 & 0.53 & 6.49$\times$ \\
                \bottomrule
            \end{tabular}}
        \end{table}

\section{CONCLUSION}
    We identify a fundamental limitation in label-free QAT with KD: prior teacher-aware projection suffers from a distributional mismatch between full-precision teacher features and quantized student representations, inducing an unattainable residual and an irreducible lower bound on the distillation loss. To address this issue, we introduce SQuaT, which applies a student-aware projection to align teacher features with the student’s quantization lattice, thereby eliminating this bound by restricting targets to the student’s attainable set. Experiments demonstrate consistent improvements over prior label-free QAT with KD methods, particularly in extreme low-bit regimes, and the approach is agnostic to model components, making it broadly applicable.
    
\clearpage

\subsubsection*{Acknowledgements}
This work was supported by Hyundai Motor Company and Kia, and partially supported by the Institute of Information \& Communications Technology Planning \& Evaluation(IITP) grant funded by the Korea government(MSIT) (o.RS-2025-02219317, AI Star Fellowship(Kookmin University)), and the National Research Foundation of Korea(NRF) grant funded by the Korea government(MSIT) (RS-2025-23524783)

\bibliography{references}

\clearpage
\appendix
\thispagestyle{empty}

\onecolumn
\aistatstitle{Supplementary Materials: Self-Supervised Knowledge Distillation via Student-Aware Quantized Teacher Features}

\section{IMPLEMENTATION DETAILS}
\label{appendix:implement}
On CIFAR-10~\citep{krizhevsky2009learning}, ResNet-20 and ResNet-32~\citep{he2016deep} were trained for 1200 epochs with a batch size of 256. The learning rates were set to $1 \times 10^{-3}$ for model parameters and $1 \times 10^{-5}$ for quantization parameters, and a weight decay of $1 \times 10^{-4}$ was applied. On CIFAR-100~\citep{krizhevsky2009learning}, the same architectures were trained for 720 epochs with a batch size of 64. The learning rates were set to $5 \times 10^{-4}$ for model parameters and $5 \times 10^{-6}$ for quantization parameters, and a weight decay of $5 \times 10^{-4}$ was used.

DeiT-Tiny~\citep{touvron2021training} was also evaluated on the same datasets. The model was trained for 300 epochs with a batch size of 128. The learning rates for both the model parameters and quantization parameters were set to $1 \times 10^{-3}$, and a weight decay of $5 \times 10^{-2}$ was applied.

On ImageNet-1K~\citep{deng2009imagenet}, ResNet-18~\citep{he2016deep} was trained for 100 epochs with a batch size of 128. The model was initialized with pretrained weights provided by PyTorch~\citep{paszke2019pytorch} . The learning rate was set to $1 \times 10^{-2}$ for model parameters and $1 \times 10^{-5}$ for quantization parameters, with a weight decay of $1 \times 10^{-4}$.

For BERT$_\text{BASE}$~\citep{devlin2019bert}, experiments were conducted on the GLUE benchmark~\citep{wang2018glue} for 3 epochs with a batch size of 32. The learning rate was set to $2 \times 10^{-5}$ for both the model parameters and quantization parameters, and a weight decay of $1 \times 10^{-2}$ was applied. The model was initialized with pretrained weights provided by HuggingFace.

ResNet-based models were optimized using the Adam optimizer~\citep{kingma2014adam}. However, when training ResNet-18 on ImageNet-1K, the model parameters were optimized using Stochastic Gradient Descent (SGD), while the quantization parameters were optimized using Adam. For DeiT-Tiny and BERT$_\text{BASE}$, AdamW~\citep{loshchilov2017decoupled} and BertAdam optimizers were used, respectively. 

The feature locations where SQuaT is applied depend on the model architecture. For CNN-based models (ResNet), SQuaT is applied to the input feature of the last convolutional layer in the final residual block to perform distillation in the quantized space. For Transformer-based models (DeiT, BERT), SQuaT is applied to the input feature of the first linear layer in the FFN of the last Transformer block (i.e., the attention output), with distillation performed over all tokens.

The implementation was based on Python 3.11.11 and PyTorch 2.8.0, and all experiments were conducted on an NVIDIA RTX A6000 GPU. 

\begin{table}[ht]
    \centering
    \caption{Implementation Details}
    \label{tab:training_config}
    \footnotesize
    \setlength{\tabcolsep}{4pt}
    \begin{tabular}{llcccccc}
        \toprule
        \textbf{Dataset} & \textbf{Model} & \textbf{Epoch} & \textbf{Batch} & \textbf{Optimizer} & \textbf{LR$_m$} & \textbf{LR$_q$}  & \textbf{Weight Decay} \\
        \midrule
        
        \multirow{3}{*}{CIFAR-10}
        & ResNet-20  & 1200 & 256 & Adam & $1\times10^{-3}$ & $1\times10^{-5}$ & $1\times10^{-4}$ \\
        & ResNet-32  & 1200 & 256 & Adam & $1\times10^{-3}$ & $1\times10^{-5}$ & $1\times10^{-4}$ \\
        & DeiT-Tiny  & 300  & 128 & AdamW & $1\times10^{-3}$ & $1\times10^{-3}$ & $5\times10^{-2}$  \\
        \midrule
        
        \multirow{3}{*}{CIFAR-100}
        & ResNet-20  & 720  & 64  & Adam & $5\times10^{-4}$ & $5\times10^{-6}$ & $5\times10^{-4}$ \\
        & ResNet-32  & 720  & 64  & Adam & $5\times10^{-4}$ & $5\times10^{-6}$ & $5\times10^{-4}$ \\
        & DeiT-Tiny  & 300  & 128 & AdamW & $1\times10^{-3}$ & $1\times10^{-3}$ & $5\times10^{-2}$ \\
        \midrule
        
        ImageNet-1K
        & ResNet-18  & 100  & 128 & Adam, SGD & $1\times10^{-2}$ & $1\times10^{-5}$ & $1\times10^{-4}$ \\
        \midrule
        
        GLUE
        & BERT$_{\text{BASE}}$ & 3 & 32 & BertAdam & $2\times10^{-5}$ & $2\times10^{-5}$ & $1\times10^{-2}$ \\
    \bottomrule
    \end{tabular}
\end{table}

\section{ADDITIONAL EXPERIMENTS}
Table~\ref{table:tinyimagenet_resnet18} reports the image classification results on Tiny-ImageNet~\citep{le2015tiny}. In this experiment, we evaluate 1-, 3-, and 4-bit quantization settings, and the models are trained using an EWGS-based QAT framework~\citep{lee2021network}. Tiny-ImageNet contains 200 classes with large intra-class variation, making it more complex than CIFAR-100. Despite this increased complexity, SQuaT consistently outperforms SQAKD~\citep{zhao2024self} across most bit-width settings.

In particular, under 1-bit quantization, SQuaT achieves a +0.34 pp improvement over SQAKD, which represents the largest performance gain among all bit-width configurations. This result demonstrates that the proposed method remains effective even under extreme low-bit quantization constraints.

These results indicate that SQuaT maintains stable performance improvements on larger-scale datasets, and a similar trend is consistently observed in large-scale ImageNet-1K experiments, as shown in Table~\ref{table:imagenet_resnet18}.

\begin{table}[ht]
    \caption{Top-1 Test Accuarcy (\%) of ResNet-18 on Tiny-ImageNet.}
    \label{table:tinyimagenet_resnet18}
    \centering
    \begin{tabular}{c | ccc}
        \toprule
        \multirow{2}{*}{\textbf{Bit-width}} 
        & \multicolumn{3}{c}{\textbf{ResNet-18 \scriptsize{(FP: 65.55)}}} \\
        \cmidrule(lr){2-4}
        & \textbf{SQAKD} & \textbf{SQuaT} & \textcolor{orange}{$\Delta$} \\
        \midrule
        
        \textbf{W1A1}
        & 59.18{\scriptsize\textcolor{gray}{$\pm$0.04}} 
        & \textbf{59.52}{\scriptsize\textcolor{gray}{$\pm$0.03}} 
        & \footnotesize\textcolor{orange}{+0.34} \\ 
        
        \textbf{W3A3}
        & 65.79{\scriptsize\textcolor{gray}{$\pm$0.15}} 
        & \textbf{65.59}{\scriptsize\textcolor{gray}{$\pm$0.02}} 
        & \footnotesize\textcolor{orange}{-0.20} \\ 

        \textbf{W4A4}
        & 65.73{\scriptsize\textcolor{gray}{$\pm$0.11}}  
        & \textbf{65.88}{\scriptsize\textcolor{gray}{$\pm$0.03}} 
        & \footnotesize\textcolor{orange}{+0.15} \\ 
        
        \textbf{W8A8}
        & 65.88{\scriptsize\textcolor{gray}{$\pm$0.01}}  
        & \textbf{66.05}{\scriptsize\textcolor{gray}{$\pm$0.05}} 
        & \footnotesize\textcolor{orange}{+0.17} \\ 
    
    \bottomrule
    \end{tabular}
\end{table}  

\end{document}